\newtheorem{theorem}{Theorem}[section]
\DeclareMathOperator{\EX}{\mathbb{E}}% expected value
\title{Rewarding High-Quality Data via Influence for Linear Regression}
\author{
Adam Richardson\and
Aris Filos-Ratsikas\And
Boi Faltings\\
\affiliations
\'Ecole polytechnique f\'ed\'erale de Lausanne, Switzerland \\
\emails
\{adam.richardson, aris.filosratsikas, boi.faltings\}@epfl.ch,
}
\begin{document}

\maketitle

\begin{abstract}
We consider a crowdsourcing data acquisition scenario, such as federated learning, where a Center collects data points from a set of rational Agents, with the aim of training a model. For linear regression models, we show how a payment structure can be designed to incentivize the agents to provide high-quality data as early as possible, based on a characterization of the influence that data points have on the loss function of the model.  Our contributions can be summarized as follows:  (a) we prove theoretically that this scheme ensures \emph{truthful} data reporting as a game-theoretic equilibrium and further demonstrate its robustness against mixtures of truthful and \emph{heuristic} data reports, (b) we design a procedure according to which the influence computation can be efficiently approximated and processed sequentially in batches over time, (c) we develop a theory that allows correcting the difference between the influence and the overall change in loss and (d) we evaluate our approach on real datasets, confirming our theoretical findings.
\end{abstract}

\section{Introduction}

The success of machine learning depends to a large extent on the availability of high quality data. For many applications, data has to be elicited from independent and sometimes self-interested data providers. A good example is federated learning \cite{konevcny2016federated}, where a single \emph{Center} (e.g. a large company) collects data from a set of \emph{Agents} to jointly learn a model.

So far, research in federated learning has focused on protecting the privacy of contributed data, but has not considered how to reward the Agents for the data they provided. A particular challenge is that these rewards should be scaled so that data is paid according to its quality, i.e., how useful it is for learning the model. Other examples of such settings can be found in \emph{crowdsourcing}. A prominent example of crowdsourcing is the Oxford English Dictionary\footnote{http://www.oed.com/}, which invites entries from the crowd on a voluntary basis, or more recently, the Open Street Map platform\footnote{https://www.openstreetmap.org/about}, an alternative to Google Maps, which is built on information provided by users.

Given that gathering accurate data is often a costly task, one shouldn't expect the Agents to exert the required effort unless they are properly compensated. This phenomenon has been documented in practice (e.g., see \cite{vuurens2011much,shah2015approval} and references therein) and is justified by the principles of agent rationality, an integral part of the field of \emph{game theory}. Game theory tells us that we should be rewarding agents monetarily, with their payments being dependent on the quality of the data they provide. However, to do this, we first need to answer the following question: ``What constitutes high-quality data?''

Intuitively, a high-quality data point is one that improves the accuracy of the model. We should be looking to give higher rewards to the agents that provide useful data, compared to those agents that do not contribute to improving the estimation. Additionally, we would like to reward more to agents that provide their data as early as possible.

\subsection{Our Approach}
We consider a crowdsourcing scenario like the one described above, and we aim to design incentive schemes, i.e., mechanisms that reward the Agents proportionally to the effect that they have on the accuracy of the model. A clear way of measuring the effect of individual points on the accuracy of a model is via the classical notion of \emph{influence} \cite{cook1980characterizations}. For a given data point, the influence quantifies how much the model's predictions would change if that point was not used in the training process. This allows us to quantify the effect that a single point has on the final outcome; we can simply remove the point, retrain, and compare the difference in the loss function. Based on the influence, we design schemes that reward agents proportionally to the decrease in the loss function due to their provided data points. This is desirable for the Center, since it will only have to pay for useful data and has the guarantee that the total cost of the data acquisition process will be bounded. %At the same time, this scheme is also quite useful for the Agents, as they do not have to partake in involved considerations regarding how to strategically report their data. 

We will show theoretically that the prescribed behavior, i.e., exerting the required effort to extract a sample from the underlying distribution and reporting that sample, is the best option for the agents under reasonable assumptions. In game-theoretic terms, we prove that our incentive schemes induce this type behavior as an equilibrium of the corresponding game. We strengthen our result by showing the robustness of the scheme against heuristic reports, i.e., agents that do not exert the effort to obtain useful information, and simply provide some untruthful report. Specifically, we show that (a) if the Center has independent data for testing, then the agents are always incentivized to provide truthful reports and (b) the same holds even if the test set is assembled by the reports of the agents, and a fraction of the agents decide to use an uninformed strategy.

For many practical applications computing the exact influence is prohibitively inefficient. For this reason, following \citeauthor{pmlr-v70-koh17a} [\citeyear{pmlr-v70-koh17a}], we compute an \emph{approximation of the exact influence}, based on up-weighing the training point by a small quantity. Our proposed approximation extends the idea in \cite{pmlr-v70-koh17a}, which turns out to be insufficient for our purposes. We show that the employed approximation is very close to the value of the exact influence, while achieving a notable improvement in speed for datasets of sufficiently high dimensions.

Then, we consider the case where the data points arrive sequentially; this captures most real-life scenarios of interest, as the data acquisition process is usually sequential. We employ our incentive scheme to reward the agents \emph{in batches}, with the fundamental property that agents that provide their data earlier are rewarded more, a desirable property as explained above. We consider two alternatives when it comes to the influence of data points in the current batch: \emph{M-Inclusive}, in which we include the data points of the current batch in training and \emph{M-Exclusive}, in which we do not. 

Another issue that arises when considering sequential rewarding is, ``How do we reward the first data points to arrive?'', as it is not clear how to measure the decrease in the loss function. For example, the Center might have prior knowledge that it can use to compute the initial influences.
In this paper, we adopt the simple approach of initializing the model with samples from a uniform distribution as a model of a knowledge-less initialization. The strength of this assumption is that it works in all scenarios, regardless of any possible availability of initial data points. We analytically show how the presence of the initial points affects the relation between the influence and the decrease in the loss function, for both the M-Inclusive and the M-Exclusive case.

Finally, we run experiments on several different real datasets, as well as generated data, and verify our theoretical results.

\subsection{Related Work}
The topic of learning a model when the input data points are provided by strategic sources has been the focus of a growing literature at the intersection of machine learning and game theory. A significant amount of work has been devoted to the setting in which Agents are interested in the outcome of the estimation process itself, e.g., when they are trying to sway the learned model closer to their own data points \cite{perote2004strategy,dekel2010incentive,meir2012algorithms,caragiannis2016truthful,chen2018strategyproof}. Our setting is concerned with the fundamental question of eliciting accurate data when data acquisition is costly for the agents, or when they are not willing to share their data without some form of monetary compensation. Another line of work considers settings in which the Agents have to be compensated for their loss of privacy \cite{cummings2015truthful,chen2018optimal}. 

A similar question to the one in our paper was considered by \citeauthor{Cai15} [\citeyear{Cai15}], where the authors design strategy-proof mechanisms for eliciting data and achieving a desired trade-off between the accuracy of the model and the payments issued. The guarantees provided, while desirable, are subject to certain strong assumptions. The authors assume that each agent chooses an \emph{effort level}, and the variance of the accuracy of their reports is a strictly decreasing convex function of that effort. Furthermore, these functions need to be known to the Center. Here, we only require that the cost of effort is bounded by a known quantity. Furthermore, our strategy space is more expressive in the sense that, as in real-life scenarios, data providers can choose which data to provide and not just which effort level to exert. A similar model to \cite{Cai15} was considered in \cite{westenbroek2017statistical,westenbroek2019competitive}, for the case of multiple Centers eliciting data from the crowd.

Our ideas are closely related to the literature of \emph{Peer Consistency} mechanisms \cite{faltings2017game} and \emph{Peer Prediction mechanisms for crowdsourcing} \cite{radanovic2016incentives}. The idea behind this literature is to extract high-quality information from individuals by comparing their reports against those of randomly chosen peers. This approach has been largely successful in eliciting \emph{truthful} information. The same principle applies to our case, where the payments are dependent on the improvement of the model and therefore agents are rewarded for providing \emph{helpful} information. Finally, \citeauthor{jia2019towards} [\citeyear{jia2019towards}] recently considered a setting in which the value of the provided data information is determined via the \emph{Shapley value}. Their approach is inherently different from ours, but it is worth noting that they consider the influence approximation of \cite{pmlr-v70-koh17a} for approximating the Shapley value.

% In fact, this is the main idea behind the \emph{Logarithmic Truth Serum} mechanism of \citeauthor{radanovic2015incentive} [\citeyear{radanovic2015incentive}]; see also \cite[Chapter 8]{faltings2017game}
%for more details.

\section{Setting and Foundations}\label{sec:setting}

In our setting, there is a \emph{Center} that wants to learn a model parametrized by $\theta$, with a non-negative loss function $L(z, \theta)$ on a sample $z=(x,y)$. In this paper, we will assume that the model is a linear regression model, but our general approach extends to other regression models as well. The samples are supplied by a set $\mathcal{A}$ of \emph{Agents}, with agent $i$ providing point $z_i=(x_i,y_i)$. We will denote by $\mathcal{A}_{-i}$ the set of agents without agent $i$. Given a set of data $Z = \{z_i\}_{i=1}^n$, the empirical risk is $R(Z, \theta) = \frac{1}{n}\sum_i L(z_i, \theta)$. 

It is assumed that each Agent $i$ must exert effort $e_i(z_i)$ to provide data point $z_i$. We adopt a simple effort model, in which the agent either exerts some effort $e$ to make an observation, or exerts zero effort, so $e_i(z_i)$ will be either $e$ or $0$. We assume that $e$ is fixed among agents, but we can easily adapt our results to the case of individual efforts $e_i$ per agent.

%, which is a non-increasing function of the error in the observation. In particular, we will assume that each agent must exert some minimum effort $\delta$ to make an observation, and additional effort may or may not reduce this error. This assumption allows us to capture scenarios where the degree of accuracy in the observations with a given effort level varies, based on the expertise of the Agents.

\subsubsection{Influence and Incentive Schemes} The Center must compensate the Agents for the effort they exert with some payment $p(z_i)$. We consider self-interested agents, that are trying to maximize the quantity $p_i(z_i) - e_i(z_i)$; we will refer to $z_i$ as the \emph{strategy} of agent $i$. For example, exerting effort and extracting a data point from the underlying distribution is a desirable strategy, whereas reporting random noise (e.g., sampling from the uniform distribution, with no regard to the underlying model) is an undesirable strategy. We refer to the former case as a \emph{truthful strategy} and to the latter case as a \emph{heuristic strategy}.

An \emph{incentive scheme} is a function that maps data points $z_i$ to payments $p(z_i)$; intuitively, a good incentive scheme should overcome the cost of effort (as otherwise agents are not incentivized to submit any observations) but also, crucially, to reward based on the effect that the data point $z_i$ has on improving the accuracy of the trained model. For this reason, we will design incentive schemes via the use of influences. Let $Z_{/j} = \{z_i\}_{i \neq j}$ and let
\begin{small}
$$
\hat{\theta} = \arg\min_{\theta} R(Z, \theta)\ \ \text{ and }\ \ \hat{\theta}_{/j} = \arg\min_{\theta} R(Z_{/j}, \theta).
$$ 
\end{small}
We will assume that the Center is in possession of an \emph{test set} $T = \{z_k\}$. Then the \emph{influence} of $z_j$ on the test set is defined as
\begin{small}
$$
\textrm{infl}(z_j,T,\theta)=R(T, \hat{\theta}_{/j}) - R(T, \hat{\theta}).
$$
\end{small}
We will simply write $\textrm{infl}(z_j)$, when $T$ and $\theta$ are clear from the context. Then, we can design incentive schemes based on the following general principle:
\begin{itemize}[leftmargin=*]
	\item[-] Compute the influence of the report $z_j$ of each Agent $j$ on the test set $T$.
	\item[-] Assign a score (payment) to the agent proportional to the influence of $z_j$.
	\item[-] Scale the scores appropriately to ensure that it is higher than the cost of effort $e_j(z_j)$ for agent $j$.
\end{itemize}
Following standard game-theoretic terminlogy, we will say that an agent supplying point $r_j$ is \emph{best responding} to the set of strategies $r_{-j}$ chosen by the other agents, if the strategy that it has chosen maximizes the quantity $\EX[p_i(r_j,r_{-j})-e_i(z_j)]$ over all possible alternative reports $z_j'$, where the expectation is over the distribution of reports of the other agents. We will say that a vector of strategies (i.e., a strategy profile) $(r_1, \ldots, r_n)$ is a \emph{Bayes-Nash equilibrium (BNE)} if for each agent $j$, $r_j$ is a best response. Below, we prove that the profile of truthful strategies is a BNE.

\begin{theorem}
	Suppose that the agents believe that their observations have mean $0$ error. Given an agent $j$, suppose agents $\mathcal{A}_{/j}$ all follow truthful strategies. Then, agent $j$ maximizes its expected influence by being following the truthful strategy as well.
\end{theorem}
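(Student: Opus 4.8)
The plan is to collapse everything to a single quadratic-optimization statement: for least-squares regression, maximizing the expected influence of $z_j$ is the same as choosing the report that, in expectation, pulls the refitted parameter $\hat\theta$ as close as possible to the true parameter $\theta^\star$, and a genuine sample does this best precisely because of the mean-$0$-error hypothesis.

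First I would make the influence explicit. Write $A=X_{/j}^\top X_{/j}$ and $\hat\theta_{/j}=A^{-1}X_{/j}^\top y_{/j}$ for the fit on the other (truthful) agents. Adding $z_j=(x_j,y_j)$, the Sherman--Morrison identity gives
\[
\hat\theta-\hat\theta_{/j}=\dfrac{A^{-1}x_j\,(y_j-x_j^\top\hat\theta_{/j})}{1+x_j^\top A^{-1}x_j}=:\Delta_j .
\]
Substituting $\hat\theta=\hat\theta_{/j}+\Delta_j$ into $R(T,\cdot)$ and expanding the squares turns the influence into a concave quadratic,
\[
\mathrm{infl}(z_j)=R(T,\hat\theta_{/j})-R(T,\hat\theta)=2\,\bar v_T^\top\Delta_j-\Delta_j^\top\bar X_T\Delta_j ,
\]
with $\bar X_T=\tfrac1{|T|}\sum_k x_kx_k^\top$ and $\bar v_T=\tfrac1{|T|}\sum_k(y_k-x_k^\top\hat\theta_{/j})x_k$; completing the square this equals $\lVert\hat\theta_{/j}-\theta^\star_T\rVert_{\bar X_T}^2-\lVert\hat\theta-\theta^\star_T\rVert_{\bar X_T}^2$, where $\theta^\star_T$ minimizes $R(T,\cdot)$.

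Next I would take the expectation from the best-response definition --- over the other agents' reports and over any randomness in agent $j$'s own strategy --- together with the expectation over the test set and the agent's belief about the model. Since $R(T,\hat\theta_{/j})$ does not involve $z_j$, maximizing expected influence is equivalent to \emph{minimizing} $\EX[R(T,\hat\theta)]$. Under the mean-$0$-error hypothesis the population form of $R(T,\cdot)$ is $\theta\mapsto\mathrm{const}+\lVert\theta-\theta^\star\rVert_{\Sigma_x}^2$, minimized at the true $\theta^\star$, so the task becomes: choose the strategy for $z_j$ minimizing $\EX[\lVert\hat\theta-\theta^\star\rVert_{\Sigma_x}^2]$. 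A bias--variance split conditional on the design then does most of the work: ordinary least squares on responses with conditional mean $x^\top\theta^\star$ is unbiased, so a truthful $z_j$ keeps $\hat\theta$ conditionally unbiased for \emph{every} $\theta^\star$, wiping out the bias term; and among all strategies that hold this bias at zero, truthfulness also minimizes the variance term, because the agent cannot report a $y_j$ centered at $x_j^\top\theta^\star$ that carries less noise than one honest observation does.

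The crux, and the step I expect to be the main obstacle, is ruling out strategies that deliberately trade bias for variance --- most sharply a deterministic report, which removes agent $j$'s contribution to the variance of $\hat\theta$ altogether but is necessarily off-center. To close this, I would make the agent's belief about $\theta^\star$ explicit, write both the bias that $z_j$ injects into $\hat\theta$ and the variance it removes in terms of the common vector $A^{-1}x_j$, and show that the belief-averaged squared bias always dominates the variance reduction, with equality only for degenerate reports (such as $x_j=0$) that carry zero influence anyway. This is precisely where the agent's inability to pin down $x_j^\top\theta^\star$ without sampling --- the real content of the mean-$0$-error assumption --- must be used quantitatively rather than merely formally.
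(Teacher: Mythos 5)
Your reduction is sound as far as it goes and is considerably more explicit than the paper's own argument: the Sherman--Morrison expansion, the rewriting of the influence as $\lVert\hat\theta_{/j}-\theta^\star_T\rVert^2_{\bar X_T}-\lVert\hat\theta-\theta^\star_T\rVert^2_{\bar X_T}$, and the observation that the problem collapses to minimizing $\EX[\lVert\hat\theta-\theta^\star\rVert^2_{\Sigma_x}]$ are all correct and refine the paper's much coarser first moves, namely that $R(T,\hat\theta_{/j})$ is fixed and that the population risk is minimized at the true parameter. Where you diverge is that the paper then simply asserts that, since agent $j$ ``has no knowledge of $\Phi$,'' only reporting the observation can guarantee an expected improvement; you instead set out to prove the quantitative statement hiding behind that assertion.

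The gap is exactly the step you flag yourself: you never establish that the belief-averaged squared bias of a non-truthful report dominates the variance it removes. After your reduction, writing $s=\tilde y_j-x_j^\top\theta^\star$ for the report's deviation from the true conditional mean, the only term the agent controls is (up to cross terms that vanish because $\hat\theta_{/j}$ is centered at $\theta^\star$) $\EX[s^2]\,\lVert w\rVert^2_{\Sigma_x}$ with $w\propto A^{-1}x_j$, so the whole theorem reduces to the inequality $\EX[(\tilde y_j-x_j^\top\theta^\star)^2]\ge\EX[\delta^2]$ for every feasible report. That inequality is not a formality: if the agent's prior over $\theta^\star$ is informative relative to the observation noise, the deterministic report $\tilde y_j=x_j^\top\EX[\theta^\star]$, or the posterior-mean report that shrinks $o_j$ toward the prior, makes the left side strictly smaller, and truthful reporting is then not a best response. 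A completed proof therefore needs the agent's belief about the model to be sufficiently diffuse --- this is the implicit work being done by the paper's phrase ``agent $j$ has no knowledge of $\Phi$'' --- and must invoke that diffuseness to bound $\EX[(\tilde y_j-x_j^\top\theta^\star)^2]$ from below. As written, your proposal names the right obstacle and the right quantities but stops at a plan; the decisive inequality, which is the entire mathematical content of the theorem, is still missing.
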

\begin{proof}
	Let $o_j = z_j+\delta = (x_j, y_j+\delta)$, be the sample that agent $j$ observes, where $\delta$ is random variable with mean $0$. If agent $j$ reports $r_j$, its score is $R(T, \hat{\theta}_{/j}) - R(T, \hat{\theta})$. Since $R(T, \hat{\theta}_{/j})$ is fixed, then agent $j$ wishes to minimize $R(T, \hat{\theta})$. Let us assume that the true distribution of samples is $\Phi$, but that agent $j$ does not know $\Phi$. Then, in expectation, $R(T, \hat{\theta}) = \int_{\Phi} R(\phi, \hat{\theta}) d\phi$. Given a sufficient number of samples to compute $\hat{\theta}_{/j}$, $\int_{\Phi} R(\phi, \hat{\theta}) d\phi = R_{min}$, the minimum possible risk for the model family. Since agent $j$ has no knowledge of $\Phi$, then only by reporting $o_j$ can agent $j$ guarantee that in expectation he's moving $\hat{\theta}$ closer to the true optimum. Therefore agent $j$ maximizes his expected score by reporting $o_j$, i.e., by using the truthful strategy.
\end{proof}
The BNE result then follows immediately from the fact that, according to the incentive scheme, higher influences yield higher payments and the cost of effort is covered when the reports are truthful. 

\section{Methods}
\subsection{Influence Approximation}
Trying to practically implement an incentive-based payment mechanism imposes a host of challenges. The first is the computational cost of computing the influence for an agent. Specifically, we must compute $\hat{\theta}_{/j}$, which would involve entirely retraining the model. We present an approximation method based on the method described in \cite{pmlr-v70-koh17a}, which gives the following formula for the influence of $z_{j}$ on test point $z_{test}$:
\begin{small}
$$
\textrm{infl}(z_{test}, z_{j}) = \frac{1}{n}\nabla_\theta L(z_{test}, \hat{\theta}) H^{-1}_{\theta} \nabla_\theta L(z, \hat{\theta})
$$
\end{small}
This formula is derived by taking the first term of the Taylor expansion of the empirical risk with respect to $\theta$, which yields an approximation error of $O(1/n^2)$. However, this approximation has the undesirable property that the mean influence is $0$, by the definition of $\hat{\theta}$ as the solution to $\sum \nabla_\theta L(z, \hat{\theta}) = 0$. We can eliminate this property by including the second order term in the Taylor expansion of the empirical risk. Let $\partial \theta_j$ be the change in theta due to up-weighting a training point $z_j$, and let $H_i$ be the Hessian computed only on $z_i$.\\.
\begin{small}
$$
\partial \theta_j =  \frac{1}{n} H^{-1}_{\theta} \nabla_\theta L(z_i, \hat{\theta}) + \frac{1}{n^2} H^{-1}_{\theta} H_i H^{-1}_{\theta} \nabla_\theta L(z_i, \hat{\theta})
$$
\end{small}
We must also take into account the second order approximation of the change in the loss on a test point when computing the influence. Rather than approximate the change in test loss by $\textrm{infl}(z_\text{test}, z) = (\nabla_\theta L(z_{\text{test}}, \hat{\theta})) \cdot \partial \theta$, we take the second term in the Taylor expansion:
\begin{small}
$$
\textrm{infl}(z_\text{test}, z) = \left(\nabla_\theta L(z_{\text{test}}, \hat{\theta}) + \frac{1}{2} H_{\theta, z_\text{test}} \cdot \partial \theta \right) \cdot \partial \theta
$$
\end{small}
For the case of linear regression, computing $\hat{\theta}_{/j}$ for each data point can be computationally infeasible, but we show in Fig. \ref{fig:comptimes} that, while the computation time for the approximation grows faster with the number of test points, given a fixed test set the approximation will improve computation time given a high enough input dimension. For a model that is learned via an SGD method, it is clear that the influence approximation will provide significant improvements.

\begin{figure}[t]
\includegraphics[width=.45\textwidth]{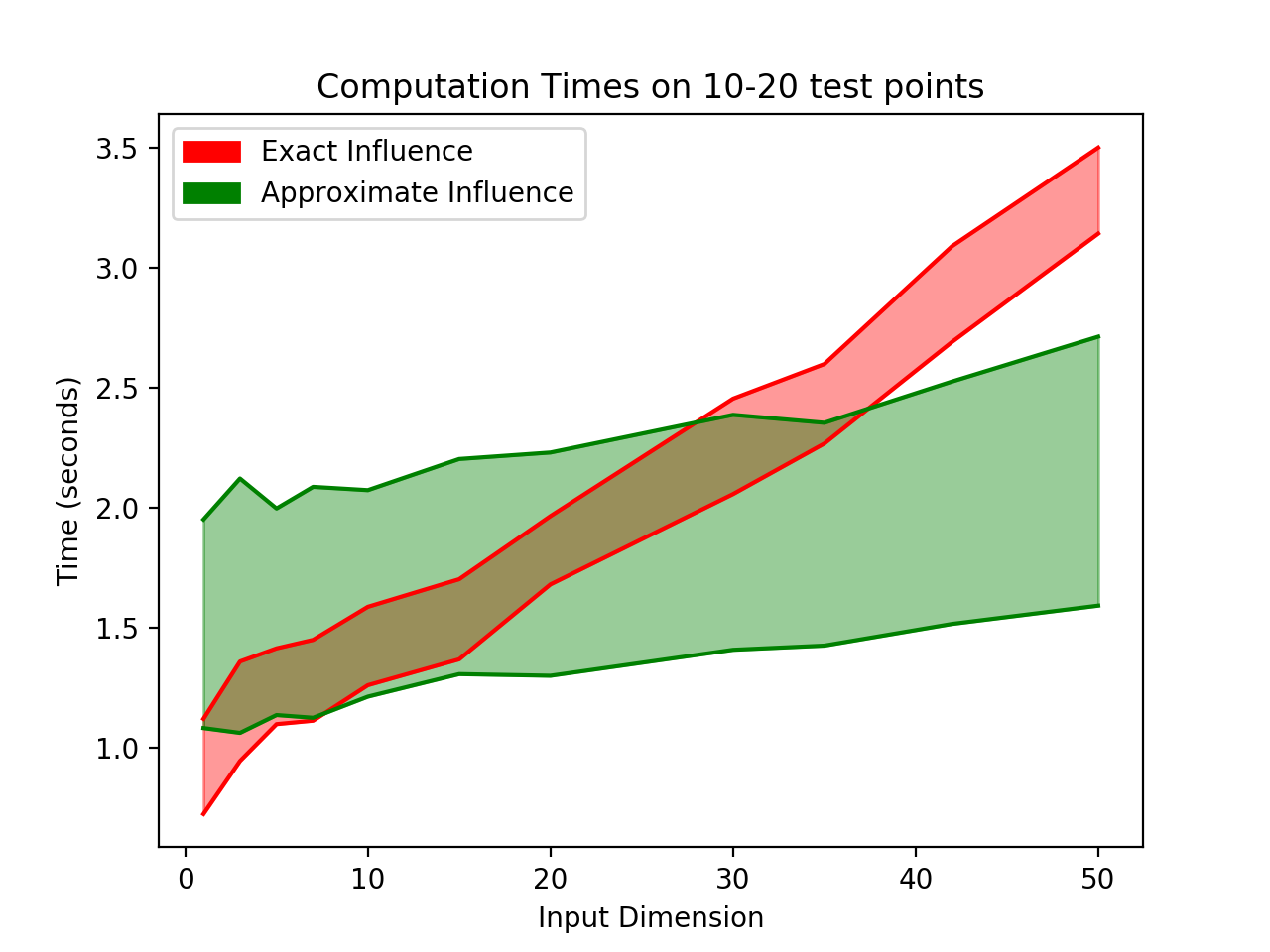}
\caption{Computation Times of Exact and Approximate Influences over 2000 training points.}
\label{fig:comptimes}
\end{figure}

\subsection{Batch Processing}
In a practical implementation, data arrives sequentially. Prior, we assumed that the influences would be computed over the entire dataset once all the data has been collected. %Batch processing has the benefit of incentivizing early reporting. 
Ideally, the Center could compute the influence and provide the payment immediately when a data point arrives. This has the additional advantage of allowing the Center to perform a priori budgeting. Suppose we have a dataset $\{z_i\}$ such that $i$ indicates the time of arrival of each datapoint. Then the sum of influences is the overall change in loss from the dataset.
\begin{small}
$$
\sum_{j=0}^n R(T, \hat{\theta}_{\{z_i\}_{i<j}}) - R(T, \hat{\theta}_{\{z_i\}_{i \leq j}}) = R(T, \hat{\theta}_{\emptyset}) - R(T, \hat{\theta}_{\{z_i\}})
$$
\end{small}
By assigning a utility to the overall change in risk, the Center can budget the entire data collection period before implementing the mechanism. However, computing the influence for each data point as it arrives can be computationally prohibitive, even using the influence approximation. The computation time of the approximation, in terms of complexity, is dominated by computing $H^{-1}$, which must be computed every time the model is updated. The Center can strike a balance between the two extremes by grouping the data into batches, such that $H^{-1}$ is only computed once per batch.\\
It is clear that with respect to a single batch, the game theory is the same as the one-batch case, however, we must now consider how batch processing affects incentives with respect to the time of reporting. We observe the following: the influence approximation presented in \cite{pmlr-v70-koh17a} has absolute error with respect to the exact influence of $O(\frac{1}{n^2})$, and the influence approximation is $0$ in expectation as shown in 3.1. Therefore, in expectation, the exact influence is $O(\frac{1}{n^2})$. With this it is clear that batch processing incentivizes Agents to report as early as possible, which is a desirable property for the Center. We show empirically that this is the case in Fig.~\ref{fig:inftime}.
\begin{figure}[t]
\includegraphics[width=.45\textwidth]{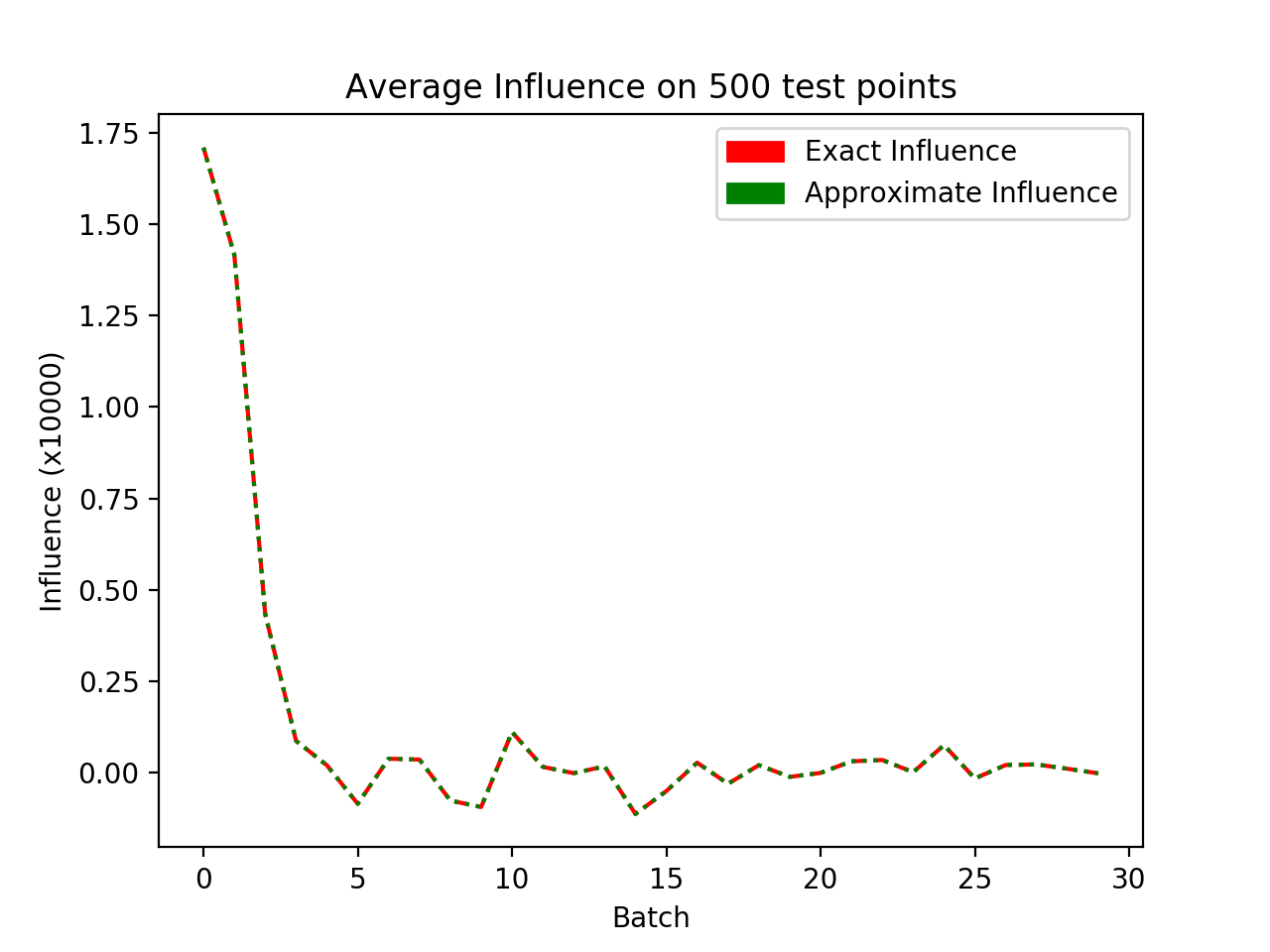}
\caption{Influence of 30 batches with batch size 100. Influence decreases rapidly after the first few batches.}
\label{fig:inftime}
\end{figure}

\subsection{Initializing the Model}
One point we did not address was the meaning of $\hat{\theta}_{\emptyset}$: the optimal parameters of the model given no training dataset. In many cases, for example for linear regression, this is not properly defined. We term this the \emph{initialization of the model}. We interpret this \emph{initial model} as the aggregate knowledge of the Center prior to the data collection period. If the Center already has some data it can use for training, then intuitively it won't need to collect as many data points or spend as much money during the data collection period. If the Center, rather than having a prior dataset, has some knowledge about the distribution of the model it wishes to learn, it can artificially generate a dataset by sampling from this prior distribution, with the number of samples corresponding to the confidence of the Center in the prior model. In the case where the Center has no prior knowledge, we consider this from an information theoretic perspective to be a state of maximum entropy, and therefore the initial model would be determined by sampling from the uniform distribution within the appropriate bounds.

\begin{table*}[!b]
	\centering
	\begin{tabular}{|c|c|c|c|c|c|c|}
		\hline
		\multicolumn{1}{|c|}{\textbf{Data Type}} & 
		\multicolumn{3}{c|}{\textbf{1st Order Approximation}} &
		\multicolumn{3}{c|}{\textbf{2nd Order Approximation}}
		\\ \hline
		\multicolumn{1}{|c|}{} & 
		\multicolumn{1}{c|}{\textbf{L1}} & 
		\multicolumn{1}{c|}{\textbf{Relative L1}} & 
		\multicolumn{1}{c|}{\textbf{L2}} & 
		\multicolumn{1}{c|}{\textbf{L1}} & 
		\multicolumn{1}{c|}{\textbf{Relative L1}} & 
		\multicolumn{1}{c|}{\textbf{L2}} 
		\\ \hline
		Linear Generated & 3.795e-06 & 4.806e-03 & 8.907e-11 & \textbf{2.470e-11} & \textbf{1.357e-08} & \textbf{7.218e-21}\\ \hline
		Red Wine & 2.414e-05 & 1.571e-02 & 8.269e-08 & \textbf{7.728e-08} & \textbf{1.091e-04} & \textbf{4.544e-12}\\ \hline
		White Wine & 3.925e-05 & 2.264e-02 & 7.929e-07 & \textbf{7.440e-07} & \textbf{1.347e-04} & \textbf{1.655e-09}\\ \hline
		Air Quality & 7.353e-05 & 1.117e-02 & 1.426e-06 & \textbf{3.244e-07} & \textbf{8.076e-06} & \textbf{1.386e-10}\\ \hline
		Crime & 1.745e-05 & 1.625e-01 & 6.963e-09 & \textbf{6.338e-07} & \textbf{2.653e-03} & \textbf{6.244e-11}\\ \hline
		Parkinsons & 1.696e-02 & 3.519e-02 & 3.054e-01 & \textbf{6.958e-03} & \textbf{2.478e-03} & \textbf{1.471e-01}\\ \hline
	\end{tabular}
	\caption{Error between the Exact Influence and the 1st and 2nd order approximations on 1000 agents evaluated on 200 test points.}
	\label{tab:approx_error}
\end{table*}

\subsection{M-Inclusive and M-Exclusive}
With batch processing, the Center has two further choices in how to implement the mechanism. The Center may include the most current batch in updating the model and compute the influence of each data point as though it were removed, or it could exclude the current batch and compute the influence of each data point as though it were added to the rest. We call these two methods \emph{M-Inclusive} and \emph{M-Exclusive} respectively. It is clear by construction that these two methods are equivalent with a batch size of one, in which case the sum of influences is equal to the overall change in risk. For the sake of computational efficiency, the Center may wish to choose a batch size $>1$. We restrict ourselves to the case of a linear regression model, but the analysis can be extended to any model in which the optimal parameters have a closed-form solution.\\
Let us consider two probability distributions $\Phi_1$ and $\Phi_2$, and we assume they describe an input-output relationship such that $\Phi(x,y) = q(x)p(y|x)$, and $q_1(x) = q_2(x)$. This assumption merely asserts that the data we are collecting is drawn from the same domain regardless of the distribution of the output. Distributions $\Phi_1$ and $\Phi_2$ determine, in expectation, models $M_1$ and $M_2$ respectively. Let us now define $R_{i,j}$ as the expected risk of model $M_i$ evaluated on distribution $\Phi_j$. Using the standard mean-squared-error loss function, we have that $R_{i,j} = R_{j,j} + \EX[(M_i - M_j)^2]$. Now suppose we sample $N_1$ points from $\Phi_1$ and $N_2$ points from $\Phi_2$ to form our training set $\{z\}$. Because the linear regression solution is linear with respect to $y$, and $q(x)$ is fixed, then $\{z\}$ determines in expectation a model $M_c = \frac{N_1 M_1 + N_2 M_2}{N_1 + N_2}$. With this, let us consider the practical application where $\Phi_1$ is the initialization distribution and $\Phi_2$ is the distribution of reports from the Agents. Then when we evaluate the model, we are only concerned with the error of the mixed model $M_c$ evaluated on $\Phi_2$:
\begin{small}
$$
R_{c,2} = R_{2,2} + \left(\frac{N_1}{N_1+N_2}\right)^2 \EX\left[(M_2 - M_1)^2\right]
$$
\end{small}
To simplify, we fix $N_1 = Q$ as the number of points used for initialization, we define $r = \EX[(M_2 - M_1)^2]$, and we let $N_2$ vary as $x$. Then we have our expected empirical risk in terms of $x$:
\begin{small}
$$
R(x) = \frac{Q^2 r}{(Q+x)^2} + R_{2,2}
$$
\end{small}
We can approximate the influence of a data point arriving after $x$ data points as the negative of the derivative of the risk:
\begin{small}
$$
-\frac{\partial R}{\partial x} = \frac{2Q^2 r}{(Q+x)^3}
$$
\end{small}
Then we can compute the overall change in loss with $n$ data points:
\begin{small}
$$
\Delta R = R(0) - R(n) = \frac{r n (2Q + n)}{(Q+n)^2}
$$
\end{small}
Now we consider the sum of influences of points in batches of batch size $b$. Consider the sum of influences for batch $k$:
\begin{small}
$$
S(k) = -bR'(kb) = \frac{2bQ^2 r}{(Q + kb)^3}
$$
\end{small}
Consider the sum of influences across all batches for both M-Inclusive and M-Exclusive:
\begin{small}
\begin{align*}
S_{\text{inc}} = 2bQ^2 r \sum_{k=1}^{n/b} \frac{1}{(Q+kb)^3}, 
S_{\text{exc}} = 2bQ^2 r \sum_{k=0}^{n/b-1} \frac{1}{(Q+kb)^3}
\end{align*}
\end{small} 
Comparing these to the change in risk, we get the following ratios in terms of the batch size:
\begin{small}
\begin{align*}
&D_{\text{inc}}(b) = \frac{S_{\text{inc}}}{\Delta R} = \frac{Q^2(Q+n)^2 [\psi''\left(\frac{Q+n}{b}+1\right) - \psi''(\frac{Q}{b}+1)]}{n(2Q+n)b^2}\\
&D_{\text{exc}}(b) = \frac{S_{\text{exc}}}{\Delta R} = \frac{Q^2(Q+n)^2 [\psi''(\frac{Q+n}{b}) - \psi''(\frac{Q}{b})]}{n(2Q+n)b^2}
\end{align*}
\end{small}
where $\psi''(x)$ is the second derivative of the Digamma function. By computing these values, the Center can pick an arbitrary batch size and normalize the scores such that the expected sum of influences is equal to the overall change in risk.

\begin{figure*}[t]
\begin{subfigure}{.5\textwidth}
\includegraphics[width=1\textwidth]{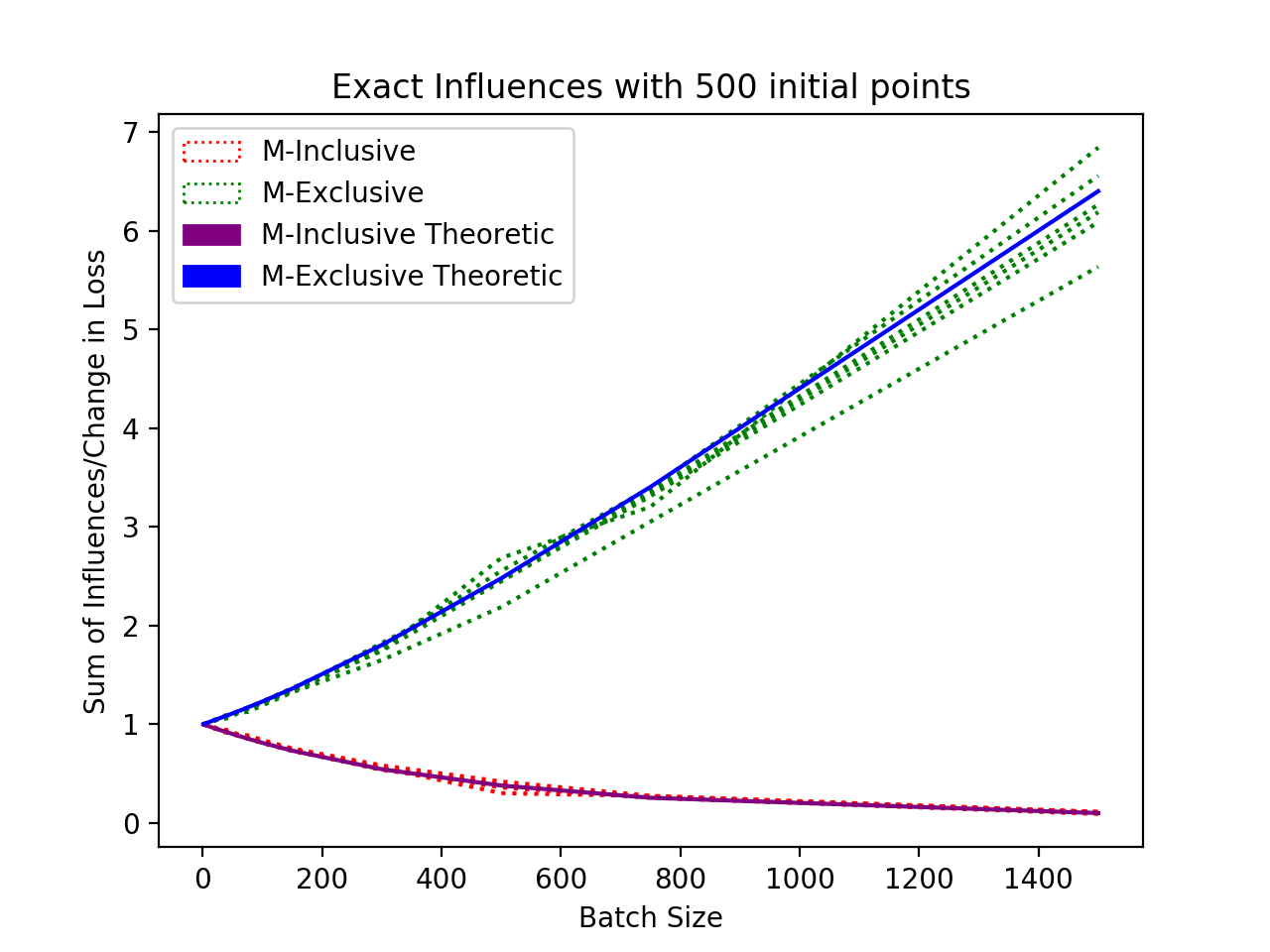}
\end{subfigure}
\begin{subfigure}{.5\textwidth}
\includegraphics[width=1\textwidth]{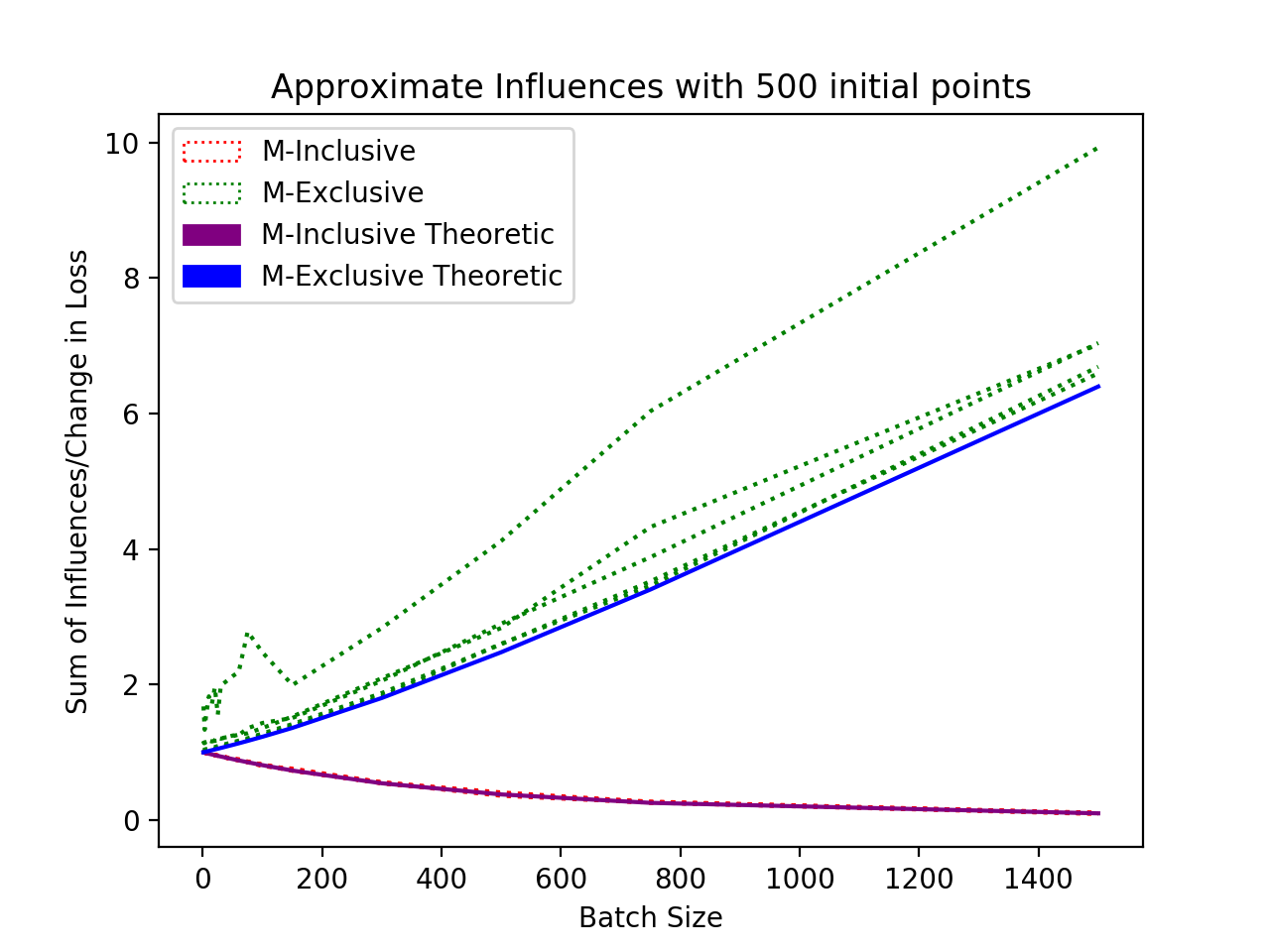}
\end{subfigure}
\caption{Ratio between Sum of Influences and Change in Loss with respect to batch size. }
\label{fig:MIncExc}
\end{figure*}

\subsection{Heuristic Reporting}
We have shown in Theorem 2.1 that under some reasonable assumptions, truthful reporting forms a Bayes-Nash Equilibrium. However, the mechanism must overcome the cost of effort, which has to be computed empirically. If the cost of effort is underestimated, then there may be agents who feel incentivized to report according to heuristics rather than to report truthfully. We can use the same analysis as in the previous subsection to show that these agents will be incentivized to opt-out, rather than play the heuristic strategy, in many realistic cases. Once again we consider two distributions $\Phi_1$ and $\Phi_2$ with the same assumptions as before. Let $\Phi_1$ be the distribution of heuristic reports and $\Phi_2$ by the distribution of truthful reports. Suppose we draw $x_1$ points from $\Phi_1$ and $x_2$ points from $\Phi_2$, with $n = x_1 + x_2$. Then we have our mixed model $M_c = \frac{x_1 M_1 + x_2 M_2}{n}$. Suppose the Center has an independent test set drawn entirely from $\Phi_2$. Then we have an expected empirical risk:
\begin{small}
$$
R_{c,2} = R_{2,2} + \frac{x_1^2 r}{n^2}
$$
\end{small}
We define $p = \frac{x_2}{n}$ as the probability that a data point is truthful, and we compute the influences of a data point drawn from $\Phi_1$ and of a data point drawn from $\Phi_2$:
\begin{small}
\begin{align*}
-\frac{\partial R_{c,2}}{\partial x_1} = - \frac{2rp(1-p)}{n},
-\frac{\partial R_{c,2}}{\partial x_2} = \frac{2r(1-p)^2}{n}
\end{align*}
\end{small}
We can see that for $p \in (0,1)$, $-\frac{\partial R_{c,2}}{\partial x_1}$ is always negative and  $-\frac{\partial R_{c,2}}{\partial x_2}$ is always positive. Thus, if the Center has an independent test set, Agents will always be incentivized to report according to the correct distribution, and agents that report according to heuristic will always receive a negative reward, and thus will not participate. 

In the general case, the Center won't have access to an independent test set. It will draw samples from the reports to build a test set. The mixed model $M_c$ will then be evaluated on the mixed distribution $\Phi_c = \frac{x_1 \Phi_1 + x_2 \Phi_2}{n}$. Taking the empirical risk $R_{c,c}$, we computing the following influences:
\begin{small}
\begin{align*}
&-\frac{\partial R_{c,c}}{\partial x_1} = \frac{p}{n}R_{1,1} - \frac{p}{n}R_{2,2} - \frac{p(2p-1)r}{n}\\
&-\frac{\partial R_{c,c}}{\partial x_2} = -\frac{1-p}{n}R_{1,1} + \frac{1-p}{n}R_{2,2} + \frac{(1-p)(2p-1)r}{n}
\end{align*}
\end{small}
We wish to know under what conditions the agents reporting according to $\Phi_2$ will receive a higher reward than those reporting according to $\Phi_1$:
\begin{small}
$$
0< -\frac{\partial R_{c,c}}{\partial x_2} - -\frac{\partial R_{c,c}}{\partial x_1}
$$
\end{small}
This yields:
\begin{small}
$$
p > \frac{1}{2} + \frac{R_{2,2} - R_{1,1}}{2r}
$$
\end{small}
Intuitively, this shows that if the inherent error of the correct model is greater than that of the heuristic model, then the mechanism requires more than half the reports to be truthful. Conversely, if the error of the correct model is less than that of the heuristic model, then the mechanism is more robust and does not require a majority of truthful reporters to maintain a proper incentive for truthful reporting.

\section{Experimental Results}

\paragraph{Datasets:} We present experimental results to demonstrate the validity of our methods in real scenarios. We start by enumerating the datasets used in our simulations:
\begin{itemize}[leftmargin=*]
	\item[-] \emph{Linear Generated}: We generate linear regression data as follows: pick an angle $\theta$ uniformly in $[-\pi/2, \pi/2]$, and a bias term from $N(0, 1)$. Using $\theta$ and the bias to determine a linear model, we uniformly sample $x \in [-1,1]$ and determine ground truth $y_{gt}$ values. We then add a noise variable drawn from $N(0, 1)$ to produce observations $y$.
	\item[-] \emph{Red Wine and White Wine}: UCI datasets with 11 attributes that predict a quality metric. \cite{cortez2009modeling}
	\item[-] \emph{Air Quality}: A UCI dataset with 15 attributes. We removed 6 attributes because they are either non-predictive or they have many missing values. We chose "C6H6(GT)" as the predicted attribute. \cite{de2008field}
	\item[-] \emph{Communities and Crime (Crime)}: A UCI dataset with 128 attributes. We remove 27 attributes because they are either not predictive or they have many missing values. We use "ViolentCrimesPerPop" as the predicted attribute. \cite{redmond2002data}
	\item[-] \emph{Parkinsons Telemonitoring (Parkinsons)}: A UCI dataset with 26 attributes. We removed 4 attributes because they are either non-predictive or they are redundant predicted attributes. We use "total\_UPDRS" as our predicted attribute. \cite{tsanas2009accurate}
\end{itemize}

\paragraph{Approximation Accuracy:}
We show in Table \ref{tab:approx_error} that on all datasets, our Second Order Approximation formula produces much better influence estimates in terms of absolute and relative absolute error than the First Order Approximation presented in \citeauthor{pmlr-v70-koh17a} [\citeyear{pmlr-v70-koh17a}].

\paragraph{M-Inclusive and M-Exclusive:}
We ran simulations to estimate the effect of batch size on the ratios $D_{\text{inc}}$ and $D_{\text{exc}}$. We ran each simulation with 1500 total training points with a varying batch size. Given a fixed batch size, we ran 10 trials for every dataset and aggregated them to form a more general estimate of $S_{\text{inc}}$, $S_{\text{exc}}$, and $\Delta R$. We then took the ratios of these aggregates and compared against our theoretical results for $D_{\text{inc}}$ and $D_{\text{exc}}$ in Fig. \ref{fig:MIncExc}. We ran this same simulation with different numbers of initial points 20, 100, 200, and 500. We have chosen only to show the case with 500 initial points, although the other simulations show the same relationship.

\section{Conclusion}
Federated learning with self-interested data providers requires incentives to ensure that data is of good quality, and that it is contributed as early as possible. For learning regression models, we have presented a novel incentive scheme based on influence functions that makes truthful reporting of accurate data a game-theoretic equilibrium. We have shown how influence can be approximated and processed in batches for efficiency, and developed a theory that allows correcting the difference between the influence and the overall change in loss. If the Center has a utility function that is non-linear with the loss of the model, this can even be passed on to the payments. We also analyzed the influence of agents that report heuristically without making the effort to collect actual data. We have empirically validated the theoretical results on multiple datasets. Although we have not addressed privacy in this paper, incentives are computed through matrix operations that can be carried out privately using multiparty computation \cite{du2004privacy}. We plan to investigate this and other options in future work.

\clearpage
\bibliographystyle{named}
\bibliography{references}

\begin{thebibliography}{}

\bibitem[\protect\citeauthoryear{Cai \bgroup \em et al.\egroup }{2015}]{Cai15}
Yang Cai, Constantinos Daskalakis, and Christos Papadimitriou.
\newblock Optimum statistical estimation with strategic data sources.
\newblock In Peter Grünwald, Elad Hazan, and Satyen Kale, editors, {\em
  Proceedings of The 28th Conference on Learning Theory}, volume~40 of {\em
  Proceedings of Machine Learning Research}, pages 280--296, Paris, France,
  03--06 Jul 2015. PMLR.

\bibitem[\protect\citeauthoryear{Caragiannis \bgroup \em et al.\egroup
  }{2016}]{caragiannis2016truthful}
Ioannis Caragiannis, Ariel Procaccia, and Nisarg Shah.
\newblock Truthful univariate estimators.
\newblock In {\em International Conference on Machine Learning}, pages
  127--135, 2016.

\bibitem[\protect\citeauthoryear{Chen \bgroup \em et al.\egroup
  }{2018a}]{chen2018optimal}
Yiling Chen, Nicole Immorlica, Brendan Lucier, Vasilis Syrgkanis, and Juba
  Ziani.
\newblock Optimal data acquisition for statistical estimation.
\newblock In {\em Proceedings of the 2018 ACM Conference on Economics and
  Computation}, pages 27--44. ACM, 2018.

\bibitem[\protect\citeauthoryear{Chen \bgroup \em et al.\egroup
  }{2018b}]{chen2018strategyproof}
Yiling Chen, Chara Podimata, Ariel~D Procaccia, and Nisarg Shah.
\newblock Strategyproof linear regression in high dimensions.
\newblock In {\em Proceedings of the 2018 ACM Conference on Economics and
  Computation}, pages 9--26. ACM, 2018.

\bibitem[\protect\citeauthoryear{Cook and
  Weisberg}{1980}]{cook1980characterizations}
R~Dennis Cook and Sanford Weisberg.
\newblock Characterizations of an empirical influence function for detecting
  influential cases in regression.
\newblock {\em Technometrics}, 22(4):495--508, 1980.

\bibitem[\protect\citeauthoryear{Cortez \bgroup \em et al.\egroup
  }{2009}]{cortez2009modeling}
Paulo Cortez, Ant{\'o}nio Cerdeira, Fernando Almeida, Telmo Matos, and Jos{\'e}
  Reis.
\newblock Modeling wine preferences by data mining from physicochemical
  properties.
\newblock {\em Decision Support Systems}, 47(4):547--553, 2009.

\bibitem[\protect\citeauthoryear{Cummings \bgroup \em et al.\egroup
  }{2015}]{cummings2015truthful}
Rachel Cummings, Stratis Ioannidis, and Katrina Ligett.
\newblock Truthful linear regression.
\newblock In {\em Conference on Learning Theory}, pages 448--483, 2015.

\bibitem[\protect\citeauthoryear{De~Vito \bgroup \em et al.\egroup
  }{2008}]{de2008field}
Saverio De~Vito, Ettore Massera, M~Piga, L~Martinotto, and G~Di~Francia.
\newblock On field calibration of an electronic nose for benzene estimation in
  an urban pollution monitoring scenario.
\newblock {\em Sensors and Actuators B: Chemical}, 129(2):750--757, 2008.

\bibitem[\protect\citeauthoryear{Dekel \bgroup \em et al.\egroup
  }{2010}]{dekel2010incentive}
Ofer Dekel, Felix Fischer, and Ariel~D Procaccia.
\newblock Incentive compatible regression learning.
\newblock {\em Journal of Computer and System Sciences}, 76(8):759--777, 2010.

\bibitem[\protect\citeauthoryear{Du \bgroup \em et al.\egroup
  }{2004}]{du2004privacy}
Wenliang Du, Yunghsiang~S Han, and Shigang Chen.
\newblock Privacy-preserving multivariate statistical analysis: Linear
  regression and classification.
\newblock In {\em Proceedings of the 2004 SIAM international conference on data
  mining}, pages 222--233. SIAM, 2004.

\bibitem[\protect\citeauthoryear{Faltings and
  Radanovic}{2017}]{faltings2017game}
Boi Faltings and Goran Radanovic.
\newblock Game theory for data science: eliciting truthful information.
\newblock {\em Synthesis Lectures on Artificial Intelligence and Machine
  Learning}, 11(2):1--151, 2017.

\bibitem[\protect\citeauthoryear{Jia \bgroup \em et al.\egroup
  }{2019}]{jia2019towards}
Ruoxi Jia, David Dao, Boxin Wang, Frances~Ann Hubis, Nick Hynes, Nezihe~Merve
  Gurel, Bo~Li, Ce~Zhang, Dawn Song, and Costas Spanos.
\newblock Towards efficient data valuation based on the shapley value.
\newblock In {\em Proceedings of the 22nd International Conference on
  Artificial Intelligence and Statistics (AISTATS)}, 2019.

\bibitem[\protect\citeauthoryear{Koh and Liang}{2017}]{pmlr-v70-koh17a}
Pang~Wei Koh and Percy Liang.
\newblock Understanding black-box predictions via influence functions.
\newblock In Doina Precup and Yee~Whye Teh, editors, {\em Proceedings of the
  34th International Conference on Machine Learning}, volume~70 of {\em
  Proceedings of Machine Learning Research}, pages 1885--1894, International
  Convention Centre, Sydney, Australia, 06--11 Aug 2017. PMLR.

\bibitem[\protect\citeauthoryear{Kone{\v{c}}n{\`y} \bgroup \em et al.\egroup
  }{2016}]{konevcny2016federated}
Jakub Kone{\v{c}}n{\`y}, H~Brendan McMahan, Felix~X Yu, Peter Richt{\'a}rik,
  Ananda~Theertha Suresh, and Dave Bacon.
\newblock Federated learning: Strategies for improving communication
  efficiency.
\newblock {\em arXiv preprint arXiv:1610.05492}, 2016.

\bibitem[\protect\citeauthoryear{Meir \bgroup \em et al.\egroup
  }{2012}]{meir2012algorithms}
Reshef Meir, Ariel~D Procaccia, and Jeffrey~S Rosenschein.
\newblock Algorithms for strategyproof classification.
\newblock {\em Artificial Intelligence}, 186:123--156, 2012.

\bibitem[\protect\citeauthoryear{Perote and
  Perote-Pena}{2004}]{perote2004strategy}
Javier Perote and Juan Perote-Pena.
\newblock Strategy-proof estimators for simple regression.
\newblock {\em Mathematical Social Sciences}, 47(2):153--176, 2004.

\bibitem[\protect\citeauthoryear{Radanovic \bgroup \em et al.\egroup
  }{2016}]{radanovic2016incentives}
Goran Radanovic, Boi Faltings, and Radu Jurca.
\newblock Incentives for effort in crowdsourcing using the peer truth serum.
\newblock {\em ACM Transactions on Intelligent Systems and Technology (TIST)},
  7(4):48, 2016.

\bibitem[\protect\citeauthoryear{Redmond and Baveja}{2002}]{redmond2002data}
Michael Redmond and Alok Baveja.
\newblock A data-driven software tool for enabling cooperative information
  sharing among police departments.
\newblock {\em European Journal of Operational Research}, 141(3):660--678,
  2002.

\bibitem[\protect\citeauthoryear{Shah \bgroup \em et al.\egroup
  }{2015}]{shah2015approval}
Nihar Shah, Dengyong Zhou, and Yuval Peres.
\newblock Approval voting and incentives in crowdsourcing.
\newblock In {\em International Conference on Machine Learning}, pages 10--19,
  2015.

\bibitem[\protect\citeauthoryear{Tsanas \bgroup \em et al.\egroup
  }{2009}]{tsanas2009accurate}
Athanasios Tsanas, Max~A Little, Patrick~E McSharry, and Lorraine~O Ramig.
\newblock Accurate telemonitoring of parkinson's disease progression by
  noninvasive speech tests.
\newblock {\em IEEE transactions on Biomedical Engineering}, 57(4):884--893,
  2009.

\bibitem[\protect\citeauthoryear{Vuurens \bgroup \em et al.\egroup
  }{2011}]{vuurens2011much}
Jeroen Vuurens, Arjen~P de~Vries, and Carsten Eickhoff.
\newblock How much spam can you take? an analysis of crowdsourcing results to
  increase accuracy.
\newblock In {\em Proc. ACM SIGIR Workshop on Crowdsourcing for Information
  Retrieval (CIR’11)}, pages 21--26, 2011.

\bibitem[\protect\citeauthoryear{Westenbroek \bgroup \em et al.\egroup
  }{2017}]{westenbroek2017statistical}
Tyler Westenbroek, Roy Dong, Lillian~J Ratliff, and S~Shankar Sastry.
\newblock Statistical estimation with strategic data sources in competitive
  settings.
\newblock In {\em 2017 IEEE 56th Annual Conference on Decision and Control
  (CDC)}, pages 4994--4999. IEEE, 2017.

\bibitem[\protect\citeauthoryear{Westenbroek \bgroup \em et al.\egroup
  }{2019}]{westenbroek2019competitive}
Tyler Westenbroek, Roy Dong, Lillian~J Ratliff, and S~Shankar Sastry.
\newblock Competitive statistical estimation with strategic data sources.
\newblock {\em arXiv preprint arXiv:1904.12768}, 2019.

\end{thebibliography}

\end{document}